\newtheorem{theorem}{Theorem}
\newtheorem{proposition}[theorem]{Proposition}
\title{Embedding Compression with Isotropic Iterative Quantization}
\author{
Siyu Liao\textsuperscript{1},
Jie Chen\textsuperscript{2},
Yanzhi Wang\textsuperscript{3},
Qinru Qiu\textsuperscript{4},
Bo Yuan\textsuperscript{1}
\\
textsuperscript{1}{Department of Electrical and Computer Engineering, Rutgers University}\\
\textsuperscript{2}{MIT-IBM Watson AI Lab, IBM Research}\\
\textsuperscript{3}{Department of Electrical and Computer Engineering, Northeastern University}\\
\textsuperscript{4}{Electrical Engineering and Computer Science Department, Syracuse University}\\\
siyu.liao@rutgers.edu,
chenjie@us.ibm.com,
yanz.wang@northeastern.edu\\
qiqiu@syr.edu,
bo.yuan@soe.rutgers.edu
}
\begin{document}

\maketitle

\begin{abstract}
  Continuous representation of words is a standard component in deep learning-based NLP models. 
  However, representing a large vocabulary requires significant memory, which can cause problems, particularly on resource-constrained platforms. 
  Therefore, in this paper
  we propose an isotropic iterative quantization  (IIQ) approach for compressing embedding vectors into binary ones, leveraging the iterative quantization technique well established for image retrieval, while satisfying the desired isotropic property of PMI based models. Experiments with pre-trained embeddings (i.e., GloVe and HDC) demonstrate a more than thirty-fold compression ratio with comparable and 
  sometimes even improved performance
  over the original real-valued embedding vectors.
\end{abstract}

\section{Introduction}

Words are basic units in many natural language processing (NLP) applications, e.g., translation \cite{bahdanau2014neural} and text classification \cite{joulin2016bag}. Understanding words is crucial but can be very challenging. One difficulty lies in the large vocabulary commonly seen in applications. Moreover, their semantic permutations can be numerous, constituting rich expressions at the sentence and paragraph levels.

In statistical language models, word distributions are learned for unigrams, bigrams, and generally n-grams.
A unigram distribution presents the probability for each word. The histogram is already sufficiently complex given a large vocabulary. Then, the complexity of bigram distributions is quadratic in the vocabulary size and that of n-gram ones is exponential. The combinatorial nature  motivates researchers to develop alternative representations which otherwise explode.

Instead of word distributions, continuous representations with floating-point vectors are much more convenient to handle: they are differentiable, and their differences can be used to draw semantic analogy. A variety of algorithms were proposed over the years for learning these word vectors. Two representative ones are Word2Vec \cite{mikolov2013efficient} and GloVe \cite{pennington2014glove}. Word2Vec is a classical algorithm based on either skip grams or a bag of words, both of which are unsupervised and can directly learn word embeddings from a given corpus. GloVe is another embedding learning algorithm, which combines the advantage of a global factorization of the word co-occurrence matrix, as well as that of the local context. Both approaches are effective in many NLP applications, including word analogy and name entity recognition.

Neural networks with word embeddings are frequently used in solving NLP problems, such as sentiment analysis \cite{dos2014deep} and name entity recognition \cite{lample2016neural}. An advantage of word embeddings is that interactions between words may be modeled by using neural network layers (e.g., attention architectures).

Despite the success of these word embeddings, they often constitute a substantial portion of the overall model. For example, the pre-trained Word2Vec \cite{mikolov2013distributed} contains 3M word vectors and the storage is approximately 3GB. This cost becomes a bottleneck in deployment on resource-constrained platforms.

Thus, much work studies the compression of word embeddings. \cite{shu2017compressing} propose to represent word vectors by using multiple codebooks trained with Gumbel-softmax. \cite{grzegorczyk2017binary} learn binary document emebddings via a bag-of-word-like process. The learned vectors are demonstrated to be effective for document retrieval.

In information retrieval, iterative quantization (ITQ) \cite{gong2013iterative} transforms vectors into binary ones, which are found to be successful in image retrieval. The method maximizes the bit variance meanwhile minimizing the quantization loss. It is theoretically sound and also computationally efficient. However, \cite{grzegorczyk2017binary} find that directly applying ITQ in NLP tasks may not be effective.

In \cite{mu2017all}, authors propose an alternate approach 
that improves the quality of word embeddings without incurring extra training. 
The main idea lies in the concept of isotropy used to explain the success of pointwise mutual information (PMI) based embeddings. The authors demonstrate that the isotropy could be improved through projecting embedding vectors toward weak directions.

Therefore, in this work we propose \emph{isotropic iterative quantization} (IIQ), which leverages iterative quantization meanwhile satisfying the isotropic property. The main idea is to optimize a new objective function regarding the isotropy of word embeddings, rather than maximizing the bit variance. 

Maximizing the bit variance and maximizing isotropy are two opposite ideas, because the former performs projection toward large eigenvalues (dominant directions) while the latter projects toward the smallest ones (weak directions). Given prior success \cite{mu2017all}, it is argued that maximizing isotropy is more beneficial in NLP applications.

\section{Related Work}\label{sec:relatedwork}
In information retrieval (where the proposed method is inspired), locality-sensitive hashing (LSH) is well studied and explored. The aim of LSH is to preserve the similarity between inputs after hashing. This aim is well aligned with that of embedding compression. For example, word similarity can be measured by the cosine distance of their  embeddings. If LSH is applied, the hashed emebddings should maintain a similar distance as the original cosine distance but have much lower complexity in the meantime.

A well-known LSH method in image retrieval is ITQ \cite{gong2013iterative}. However, its application in NLP tasks such as document retrieval is not as successful \cite{grzegorczyk2017binary}. Rather, the authors propose to learn binary paragraph embeddings via a bag-of-words-like model, which essentially computes a binary hash function for the real-valued embedding vectors.

On the other hand, \cite{shu2017compressing} propose a compact structure for embeddings by using the gumble softmax. In this approach, each word vector is represented as the summation of a set of real-valued embeddings. This idea amounts to learning a low-rank representation of the embedding matrix.

Pre-trained embeddings may be directly used in deep neural networks (DNN) or serve as initialization \cite{kim2014convolutional}. There exist several compression techniques for DNNs, including pruning \cite{han2015deep} and low-rank compression \cite{sainath2013low}. Most of these techniques requires retraining for specific tasks, thus challenges exist when applying them to unsupervised word embeddings (e.g., GloVe).

\cite{see2016compression} successfully apply DNN compression techniques to unsupervised embeddings. The authors use pruning to sparsify embedding vectors, which however requires retraining after each pruning iteration. Although retraining is common when compressing DNNs, it often takes a long time to recover the model performance. 
Similarly, \cite{acharya2019online} uses low rank approximation to compress word embeddings, but they also face the same problem to fine-tune a supervised model.

\section{Preliminaries}\label{sec:preliminaries}
\subsection{Iterative Quantization}
In this section, we briefly revisit the iterative quantization method by breaking it down into two steps. The first step is to maximize bit variance when transforming given vectors into binary representation. The second step is about minimizing the quantization loss while maintaining the maximum bit variance. 
\subsubsection{Maximize Bit Variance.}
Let $\mathbf{X}\in\mathbb{R}^{n\times d}$ be the embedding dictionary, where each row $\mathbf{x}^T_i\in\mathbb{R}^d$ denotes the embedding vector for the $i$-th word in the dictionary. Assuming  that vectors are zero centered ($\sum_{i=1}^n \mathbf{x}_i = \mathbf{0}$), ITQ encodes vectors with a binary representation $\{-1,+1\}$ through maximizing the bit variance, which is achieved by solving the following optimization problem: 
\begin{equation}
\label{eqn:max}
\begin{split}
    \max_{\mathbf{W}} \,\, F(\mathbf{W}) &= \frac{1}{n}\text{tr}(
    \mathbf{W}^T\mathbf{X}^T\mathbf{X}\mathbf{W}), \\
    \text{s.t.} \,\, 
    \mathbf{W}^T\mathbf{W}&=\mathbf{I} \,\, \text{and}\,\,  \mathbf{B}=\text{sgn}(\mathbf{X}\mathbf{W}),
\end{split}
\end{equation}
where $\mathbf{W}\in\mathbb{R}^{d\times c}$ and $c\leq d$ is the dimension of the encoded vectors. Here, $\mathbf{B}$ is the final binary representation of $\mathbf{X}$ and $\text{tr}(\cdot)$ and  $\text{sgn}(\cdot)$ are the trace and the sign function, respectively. The problem is the same as that of Principal Component Analysis (PCA) and could be solved by selecting the top $c$ right singular vectors of $\mathbf{X}$ as $\mathbf{W}$. 

\subsubsection{Minimize Quantization Loss.}
Given a solution $\mathbf{W}$ to Equation \eqref{eqn:max}, $\mathbf{U}=\mathbf{W}\mathbf{R}$ is also a solution for any orthogonal matrix $\mathbf{R}\in\mathbb{R}^{c\times c}$. Thus, we could minimize the quantization loss via adjusting the matrix $\mathbf{R}$ while maintaining the solution to \eqref{eqn:max}. The quantization loss is defined as the difference between the vectors before and after the quantization:
\begin{equation}
\label{eqn:quan}
    Q(\mathbf{B}, \mathbf{R}) = || \mathbf{B} - \mathbf{X}\mathbf{W}\mathbf{R} ||_F^2,
\end{equation}
where $||\cdot||_F$ is the Frobenius norm. Note that $\mathbf{B}$ must be binary. The proposed solution in ITQ is an iterative procedure that updates $\mathbf{B}$ and $\mathbf{R}$ in an alternating fashion until convergence. In practice, ITQ turns out able to achieve good performance with early stopping \cite{gong2013iterative}.

\subsection{Isotropy of Word Embedding}
In \cite{arora2016latent}, isotropy is used to explain the success of PMI based word embedding algorithm, for example GloVe embedding. However, \cite{mu2017all} find that existing word embeddings are not nearly isotropic but could be improved. The proposed solution is to project word embeddings toward the weak directions rather than the dominant directions, which seems counter-intuitive but in practice works well. 
The isotropy of word embedding $\mathbf{X}$ is defined as:
\begin{equation}
    I(\mathbf{X})=\frac{\min_{||\mathbf{e}||=1}Z(\mathbf{e})}{\max_{||\mathbf{e}||=1}Z(\mathbf{e})},
\end{equation}
where $Z(\cdot)$ is the partition function
\begin{equation}
    Z(\mathbf{e})=\sum_{\mathbf{x}_i\in\mathbf{X}}\exp(\mathbf{\mathbf{e}}^T\mathbf{x}_i). 
\end{equation}
The value of $I(\mathbf{X})\in [0,1]$ is a measure of isotropy of the given embedding $\mathbf{X}$. A higher $I(\cdot)$ means more isotropic and a better quality of the embedding. It is found making the singular values close to each other can effectively improve embedding isotropy.

\section{Proposed Method}\label{sec:method}
The preceding section hints that maximizing the isotropy and maximizing the bit variance are opposite in action: The former intends to make the singular values close by removing the largest singular values, whereas the latter removes the smallest singular values and maintains the largest. Given the success of isotropy in NLP applications, we propose to minimize the quantization loss while improving the isotropy, rather than maximizing the bit variance. We call the proposed method \emph{isotropic iterative quantization}, IIQ.

The key idea of ITQ is based on the observation that $\mathbf{U}=\mathbf{W}\mathbf{R}$ is still a solution to the objective function of~\eqref{eqn:max}. In our approach IIQ, we show that the orthogonal transformation maintains the isotropy of the input embedding, so that we could apply a similar alternating procedure as in ITQ to minimize the quantization loss. As a result, our method is composed of three steps: maximizing isotropy, reducing dimension, and minimizing quantization loss. 

\subsubsection{Maximize Isotropy.}
The isotropy measure $I(\mathbf{X})$ can be approximated as following \cite{mu2017all} : 
\begin{equation}
\label{eqn:approx}
\hat{I}(\mathbf{X})=
\frac{|\mathbf{X}|-||\mathbf{1}^T\mathbf{X}||+\frac{1}{2}\sigma^2_{\min}}{|\mathbf{X}|+||\mathbf{1}^T\mathbf{X}||+\frac{1}{2}\sigma^2_{\max}},
\end{equation}
where $\sigma_{\min}$ and $\sigma_{\max}$ are the smallest and largest singular values of $\mathbf{X}$, respectively. For $\hat{I}(\mathbf{X})$ to be $1$, the middle term $||\mathbf{1}^T\mathbf{X}||$ on both the numerator and the denominator must be zero and additionally $\sigma_{\min}=\sigma_{\max}$. 
The former requirement can be easily satisfied by the zero-centering given embeddings:
\begin{equation}
\begin{aligned}
\mathbf{u} &= \frac{1}{n}\mathbf{1}^T\cdot\mathbf{X}\\
\mathbf{\bar{X}} &= \mathbf{X} - \mathbf{1}\cdot\mathbf{u}^T,
\end{aligned}
\end{equation}
where $||\mathbf{1}^T\mathbf{\bar{X}}||=0$. The latter may be approximately achieved by removing the large singular values such that the rest of the singular values are close to each other.
A reason why removing the large singular values makes the rest close,  is that often the large singular values have substantial gaps while the rest are clustered. However, removing singular components does not change its dimension.
We denote the maximized result as $\mathbf{\hat{X}}$.

\subsubsection{Dimension Reduction.}
 To make our method more flexible, we perform a dimension reduction afterward by using PCA. This step essentially removes the smallest singular values so that the clustering of the singular values may be further tightened. Note that PCA won't affect the maximized  isotropy of given embeddings, since it only works on the singular values that are already closed to each other after previous step. One can treat the dimension as a hyperparameter, tailored for each data set.

\subsubsection{Minimize Quantization Loss.}

Given a solution $\mathbf{\hat{X}}$ to the maximization of \eqref{eqn:approx}, we prove that multiplying $\mathbf{\hat{X}}$ with an orthogonal matrix $\mathbf{R}$ results in the same $\hat{I}(\mathbf{X})$. In other words, we could minimize the quantization loss \eqref{eqn:quan} while maintaining the isotropy.

\begin{proposition}
If $\mathbf{\hat{X}} \in \mathbb{R}^{n\times d}$ is isotropic and $\mathbf{R}\in\mathbb{R}^{d\times d}$ is orthogonal, then $\mathbf{U}=\mathbf{\hat{X}}\mathbf{R}$ admits $\hat{I}(\mathbf{U})=\hat{I}(\mathbf{\hat{X}})$. 
\end{proposition}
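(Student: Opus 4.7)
The plan is to show that every quantity appearing in the approximation
\[
\hat{I}(\mathbf{X})=\frac{|\mathbf{X}|-\|\mathbf{1}^T\mathbf{X}\|+\tfrac{1}{2}\sigma_{\min}^2}{|\mathbf{X}|+\|\mathbf{1}^T\mathbf{X}\|+\tfrac{1}{2}\sigma_{\max}^2}
\]
is individually invariant under right-multiplication by an orthogonal matrix $\mathbf{R}$. There are only three such quantities, and each corresponds to an elementary orthogonal-invariance fact; once all three invariances are established, substituting them into the formula yields $\hat{I}(\mathbf{U})=\hat{I}(\mathbf{\hat{X}})$ immediately.

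First I would treat the $|\mathbf{X}|$ term. Interpreting $|\mathbf{X}|$ as the number of embedding vectors (the row count), multiplying on the right by $\mathbf{R}\in\mathbb{R}^{d\times d}$ does not alter the number of rows, so $|\mathbf{U}|=|\mathbf{\hat{X}}|$. Next I would handle the centering term: since $\mathbf{1}^T\mathbf{U}=(\mathbf{1}^T\mathbf{\hat{X}})\mathbf{R}$ and the Euclidean norm is preserved by an orthogonal transformation, $\|\mathbf{1}^T\mathbf{U}\|=\|\mathbf{1}^T\mathbf{\hat{X}}\mathbf{R}\|=\|\mathbf{1}^T\mathbf{\hat{X}}\|$. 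Finally, for the singular-value terms, I would observe that $\mathbf{U}^T\mathbf{U}=\mathbf{R}^T\mathbf{\hat{X}}^T\mathbf{\hat{X}}\mathbf{R}$, which is an orthogonal conjugation of $\mathbf{\hat{X}}^T\mathbf{\hat{X}}$ and therefore has the same spectrum; taking square roots gives the same singular values, hence $\sigma_{\min}(\mathbf{U})=\sigma_{\min}(\mathbf{\hat{X}})$ and $\sigma_{\max}(\mathbf{U})=\sigma_{\max}(\mathbf{\hat{X}})$.

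Substituting these three equalities into the definition of $\hat{I}$ for $\mathbf{U}$ reproduces the numerator and denominator of $\hat{I}(\mathbf{\hat{X}})$ term by term, giving the claim. Strictly speaking no hypothesis on $\mathbf{\hat{X}}$ beyond being a real $n\times d$ matrix is needed; the isotropy assumption in the statement simply records that $\mathbf{\hat{X}}$ is the output of the preceding maximization step, and the proposition asserts that the subsequent rotation $\mathbf{R}$ used in the quantization-loss minimization cannot undo that isotropy.

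There is no real obstacle here, only a choice of bookkeeping: the slightly delicate point is making sure that $\|\mathbf{1}^T\mathbf{X}\|$ is read as the Euclidean norm of a $d$-vector (so that orthogonality of $\mathbf{R}$ preserves it) and that $|\mathbf{X}|$ refers to the cardinality of the embedding set rather than some matrix norm. Once these conventions are made explicit, the proof reduces to the three one-line invariances above.
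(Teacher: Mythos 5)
Your proof is correct and takes essentially the same route as the paper: both establish $\hat{I}(\mathbf{U})=\hat{I}(\mathbf{\hat{X}})$ by checking term-by-term invariance of the approximation formula, the only (cosmetic) differences being that you get the singular-value invariance from the conjugation $\mathbf{U}^T\mathbf{U}=\mathbf{R}^T\mathbf{\hat{X}}^T\mathbf{\hat{X}}\mathbf{R}$ while the paper writes the SVD of $\mathbf{U}$ explicitly with $\mathbf{Q}'=\mathbf{Q}\mathbf{R}$, and you use the general norm invariance $\|\mathbf{1}^T\mathbf{\hat{X}}\mathbf{R}\|=\|\mathbf{1}^T\mathbf{\hat{X}}\|$ where the paper invokes the zero-centering $\|\mathbf{1}^T\mathbf{\hat{X}}\|=0$. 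Your observation that the isotropy hypothesis is not actually needed for this invariance is accurate and makes your version marginally more general, but the substance of the argument is the same.
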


\begin{proof}
Given that $\mathbf{R}$ is orthogonal, we first prove that $\mathbf{U}$ has the same singular values as does $\mathbf{\hat{X}}$. Let $\mathbf{\hat{X}}$ have the singular value decomposition (SVD)
\begin{equation}
    \mathbf{\hat{X}}=\mathbf{P}\text{diag}(\sigma_{\max}, \dots, \sigma_{\min})\mathbf{Q},
\end{equation}
where $\mathbf{P}\in\mathbb{R}^{n\times d}$ and orthogonal matrix $\mathbf{Q}\in\mathbb{R}^{d\times d}$. Let $\mathbf{Q}'=\mathbf{Q}\mathbf{R}$. Then, we have  
\begin{equation}
\label{eqn:svd}
    \mathbf{U}=\mathbf{P}\text{diag}(\sigma_{\max}, \dots, \sigma_{\min})\mathbf{Q}'.
\end{equation}
Since $\mathbf{Q}'$ is also orthogonal, Equation \eqref{eqn:svd} gives the SVD of $\mathbf{U}$. Therefore, $\mathbf{U}$ has the same singular values as does $\mathbf{\hat{X}}$. 

Moreover, $||\mathbf{1}^T\mathbf{U}||=||\mathbf{1}^T\mathbf{\hat{X}}\mathbf{R}||=0$, thus $\mathbf{U}$ is also zero-centered. By Equation \eqref{eqn:approx}, we conclude  $\hat{I}(\mathbf{U})=\hat{I}(\mathbf{\hat{X}})$. 
\end{proof}

With the given proof, we can always use an orthogonal matrix $\mathbf{R}$ to reduce the quantization loss. The iterative optimization strategy as in ITQ \cite{gong2013iterative} is adopted to minimize the quantization loss. Two alternating steps lead to a local minimum. First, compute $\mathbf{B}$ given $\mathbf{R}$:
\begin{equation}
    \mathbf{B} = \text{sgn}(\mathbf{\hat{X}}\cdot \mathbf{R}).
\end{equation}
Second, update $\mathbf{R}$ given $\mathbf{B}$. The update minimizes the quantization loss, which essentially solves the orthogonal Procrustes problem. The solution is given by
\begin{align}
\begin{split}
\mathbf{S}\cdot\mathbf{\Omega}\cdot\mathbf{\hat{S}}^T &= \text{SVD}(\mathbf{B}^T\cdot \mathbf{\hat{X}})\\
\mathbf{R} &= \mathbf{\hat{S}}\cdot\mathbf{S}^T,
\end{split}
\end{align}
where SVD($\cdot$) is the singular value decomposition function and $\mathbf{\Omega}$ is the diagonal matrix of singular values. 
\begin{figure}
    \begin{center}
    \includegraphics[width =\columnwidth]{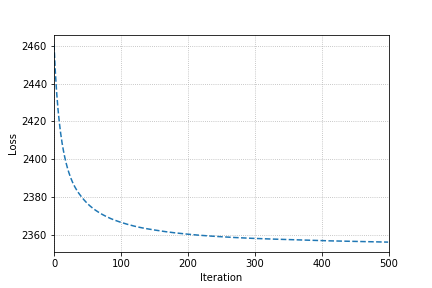}
    \end{center}
    \caption{Quantization loss curve of 50000 embedding vectors from a pre-trained CNN model.}
    \label{fig:loss}
\end{figure}

This iterative updating strategy runs until a local optimal solution is found. Fig. \ref{fig:loss} shows an example of the quantization loss curve. This result is similar to the behavior of ITQ, the authors of which proposed using early stopping to terminate iteration in practice. We follow the guidance and run only 50 iterations in our experiments.

\subsubsection{Overall Algorithm.}
Our method is an unsupervised approach, which does not require any label supervision. Therefore, it can be applied independently of downstream tasks and no fine tuning is needed. This advantage benefits many problems where embeddings often slow down the learning process because of the high space and computation complexity.

We present the pseudocode of the proposed IIQ method in Algorithm~\ref{alg:forward}. The input $D$ denotes the number of top singular values to be removed, $T$ denotes the number of iterations for minimizing the quantization loss, and $O$ denotes the dimension of the output  binary vectors.

The first two lines make zero-centered embedding. Lines 3 to 5 maximize the isotropy. Lines 6 to 8 reduce the embedding dimension, if necessary. Lines 9 to 15 minimize the quantization loss. Within the iteration loop, lines 11 to 12 update $\mathbf{B}$ based on the most recent $\mathbf{R}$, whereas lines 13 to 14 update  $\mathbf{R}$ given the updated $\mathbf{B}$. The last line uses the final transformation $\mathbf{R}$ to return the binary embeddings as output. 
{
\begin{algorithm}[!htbp]
\Indmm
 \KwIn{$\mathbf{X}\in\mathbb{R}^{n\times d}, D, T, O$}
 \KwOut{$\mathbf{B}$}
\Indpp
 $\mathbf{u}\gets \frac{1}{n}\mathbf{1}^T\cdot\mathbf{X}$\;
 $\mathbf{\bar{X}} \gets \mathbf{X} - \mathbf{1}\cdot\mathbf{u}^T$\;

 $\mathbf{S}\cdot\mathbf{\Omega}\cdot\mathbf{\hat{S}}^T\gets \text{SVD}(\mathbf{\bar{X}})$\;
 Set top $D$ singular values in $\mathbf{\Omega}$ as 0\;
 $\mathbf{\hat{X}}\gets \mathbf{S}\cdot\mathbf{\Omega}\cdot\mathbf{\hat{S}}^T$\;

 \If {$O < d$} {
  $\mathbf{\hat{X}}\gets \text{PCA}(\mathbf{\hat{X}}, O)$
 }

 Randomly initialize an orthogonal matrix $\mathbf{R}$\;
 \For{$i \gets 1$ to $T$}{
    $\mathbf{U} \gets \mathbf{\hat{X}}\cdot\mathbf{R}$\;
    $\mathbf{B} \gets \text{sgn}(\mathbf{U})$\;
    $\mathbf{S}\cdot\mathbf{\Omega}\cdot\mathbf{\hat{S}}^T\gets \text{SVD}(\mathbf{B}^T\cdot \mathbf{\hat{X}})$\;
    $\mathbf{R}\gets \mathbf{\hat{S}}\cdot\mathbf{S}^T$\;
}
 \Return{$\text{sgn}(\mathbf{\hat{X}}\cdot\mathbf{R})$}\;
 \caption{Isotropic Iterative Quantization}\label{alg:forward}
\end{algorithm}
}

\section{Experimental Results}\label{sec:experiment}
We run the proposed method on pre-trained embedding vectors and evaluate the compressed embedding in various NLP tasks.
For some tasks, the evaluation is directly conducted over the embedding (e.g., measuring the cosine similarity between word vectors); whereas for others, a classifier is trained with the embedding.
We conduct all experiments in Python by using Numpy and Keras. 
The environment is Ubuntu 16.04 with Intel(R) Xeon(R) CPU E5-2698.

\subsubsection{Pre-trained Embedding.} 
We perform experiments with the GloVe embedding \cite{pennington2014glove} and the HDC embedding \cite{sun2015learning}. 
The GloVe embedding is trained from 42B tokens of Common Crawl data.
The HDC embedding is trained from public Wikipedia. It has a better quality than GloVe because the training process considers both syntagmatic and paradigmatic relations. 
All embedding vectors are used in the experiment without  vocabulary truncation or post-processing.

In addition, we evaluate embedding compression on a CNN model pre-trained with the IMDB data set. 
Different from the prior case, the embedding from CNN is trained with supervised class labels. 
We compress the embedding and retrain the model  to evaluate performance. 
This way enables us to compare with other compression methods fairly.

\begin{table}[t]
\centering
\caption{Experiment Configurations.}
\label{tbl:conf}
\begin{tabular}{|l|l|c|c|c|}
\hline
& \bf Method  & \bf Dimension  & \bf Comp. Ratio \\ \hline
\multirow{7}{*}{\rotatebox[origin=c]{90}{GloVe}} 
& Baseline & $1917494\times 300$ &  1 \\ \cline{2-5} 
& Prune    & $1917494\times 300$ &  20\\ \cline{2-5} 
& DCCL & $M=32, K=256$ & 32 \\ \cline{2-5} 
& NLB & $1917494\times 300$ & 32 \\\cline{2-5}
& ITQ & $1917494\times 300$  &	32 \\ \cline{2-5} 
& \it IIQ-32 & $1917494\times 300$  &	32 \\ \cline{2-5} 
& \it IIQ-64 & $1917494\times 150$ &	64\\ \cline{2-5} 
& \it IIQ-128 &	$1917494\times 75$ &	 128\\ \hline
\multirow{7}{*}{\rotatebox[origin=c]{90}{HDC}} 
& Baseline & $388723\times 300$ &	1\\\cline{2-5} 
& Prune & $388723\times 300$ &		20\\\cline{2-5} 
& DCCL & $M=32, K=128$   & 29 \\\cline{2-5} 
& NLB & $388723\times 300$  & 32 \\\cline{2-5}
& ITQ & $388723\times 300$ &	32\\\cline{2-5} 
& \it IIQ-32 &	$388723\times 300$ &	32\\\cline{2-5} 
& \it IIQ-64 &	$388723\times 150$ &	64\\\cline{2-5} 
& \it IIQ-128 & $388723\times 75$ &	128\\\hline
\end{tabular}
\end{table}

\subsubsection{Configuration.}
We compare IIQ with the traditional ITQ method \cite{gong2013iterative}, the pruning method \cite{see2016compression}, deep compositional code learning (DCCL) \cite{iclrShuN18} and a recent method \cite{tissier2019near} we name as NLB. 
The pruning method is set to prune 95\% of the words for a similar compression ratio. 
The DCCL method is similarly configured.
We run NLB with its default setting. 
We train the DCCL method for 200 epochs and set the batch size to be 1024 for GloVe and 64 for HDC. 
For our method, we set the iteration number $T$ to be 50 since early stopping works sufficiently well. 
We set the same iteration number for ITQ. 
We also set the parameter $D$ to be 2 for HDC, and 14 for Glove embedding. 
Note that we perform all vector operations in real domain on the platform \cite{jastrzebski2017evaluate} and \cite{conneau2018senteval}.

\begin{table*}[]
\centering
\caption{Word Similarity Results.}
\label{tbl:sim}
\begin{tabular}{|l|l|c|c|c|c|c|c|c|}
\hline
&  \bf Method  & \bf MEN &\bf	MTurk&\bf	RG65&\bf	RW&\bf	SimLex999&\bf	TR9856&\bf	WS353 \\ \hline\hline
\multirow{7}{*}{\rotatebox[origin=c]{90}{GloVe}} 
&Baseline & 73.62 & 64.50 & 81.71 & 37.43 & 37.38 & 9.67 & 69.07 \\ \cdashline{2-9} 
&Prune & 17.97 & 22.09 & 39.66 & 12.45 & -0.37 & 8.31 & 14.52 \\ \cdashline{2-9} 
&DCCL & 54.46 & 50.46 & 63.89 & 28.04 & 25.48 & 7.91 & 54.55 \\ \cdashline{2-9}
&NLB & 73.99 & \bf 64.98 & 72.07 & 40.86 & 40.52 & \bf 14.00 & 66.09 \\ \cdashline{2-9}
&ITQ & 57.37 & 52.93 & 72.08 & 25.10 & 26.23 & 8.98 & 55.00 \\ \cdashline{2-9} 
&\it IIQ-32 & \bf 76.43 & 63.33 & \bf 78.16 & \bf 41.35 & \bf 41.87 & 9.80 & \bf 72.22 \\
&\it IIQ-64 & 71.55&58.37&74.94&37.61&38.80&12.81&67.99\\
&\it IIQ-128&59.25 &50.42&62.39&28.71&33.25&12.31&53.56\\
\hline\hline
\multirow{7}{*}{\rotatebox[origin=c]{90}{HDC}} 
&Baseline & 76.03 & 65.77 & 80.58 & 46.34 & 40.68 & 20.71 & 76.81 \\ \cdashline{2-9} 
&Prune & 46.83 & 41.49 & 56.14 & 29.84 & 26.27 & 15.27 & 52.06\\ \cdashline{2-9}
&DCCL & 68.82 & 55.78 & 72.23 & \bf 39.33 & 35.02 & \bf 18.41 & 66.09 \\\cdashline{2-9} 
&NLB & 72.06 & 61.57 & 72.58 & 35.45 & 38.50 & 11.71 & 67.20 \\ \cdashline{2-9} 
&ITQ & 72.31 & 61.68 & 74.70 & 37.01 & 37.40 & 9.69 & 72.32 \\ \cdashline{2-9} 
&\it IIQ-32 & \bf 74.37 &  \bf 66.71 &  \bf 78.04 & 38.75 & \bf 39.35 & 9.63 & \bf 75.32 \\
&\it IIQ-64 & 66.32 & 56.73 & 65.77 & 35.63 & 36.22 & 11.33 & 72.70 \\ 
&\it IIQ-128 & 55.83 & 51.33 & 45.76 & 32.03 & 29.45 & 12.61 & 58.54 \\
\hline
\end{tabular}
\end{table*}

Table \ref{tbl:conf} lists the experiment configurations with method name, dimension, embedding value type, and compression ratio. 
The baseline means the original embedding. 
Our method starts with ``IIQ,'' followed by the compression ratio. 
The ``dimension'' column gives the number of vectors and the vector dimension. 
For DCCL, we list the parameters $M$ and $K$ that determine the  compression ratio.
Note that we use single precision for real values. 
The last column shows the compression ratio, which is the  the size of the original embedding over that of the compressed one. 
Thus, the compression from real value to binary is 32. 
Moreover, we also apply dimension reduction in IIQ so that higher compression ratio is possible.

\subsection{Word Similarity}

The task measures Spearman's rank correlation between word vector similarity and human rated similarity.
A higher correlation means a better quality of the word embedding. 
The similarity between two words is computed as the cosine of the corresponding vectors, i.e., $\cos(\mathbf{x}, \mathbf{y})=\mathbf{x}^T\mathbf{y}/(||\mathbf{x}||\cdot||\mathbf{y}||)$, where $\mathbf{x}$ and $\mathbf{y}$ are two word vectors. 
Out-of-vocabulary (OOV) words are replaced by the mean vector.

In this experiment, seven data sets are used, including 
MEN \cite{bruni2014multimodal} with 3000 pairs of words obtained from Amazon crowdsourcing; 
MTurk \cite{radinsky2011word} with 287 pairs, focusing on word semantic relatedness; 
RG65 \cite{rubenstein1965contextual} with 65 pairs, an early published dataset;
RW \cite{luong2013better} with 2034 pairs of rare words selected based on frequencies; 
SimLex999 \cite{hill2015simlex} with 999 pairs, aimed at genuine similarity estimation; 
TR9856 \cite{levy2015tr9856} with 9856 pairs, containing many acronyms and name entities; and
WS353 \cite{agirre2009study} with 353 pairs of mostly verbs and nouns. 
The experiment is conducted on the platform \cite{jastrzebski2017evaluate}.

Table \ref{tbl:sim} summarizes the results.
The performance of IIQ degrades as the compression ratio increases.
This is expected, since a higher compression ratio leads to more loss of information.
In addition, our IIQ method consistently achieves better results than ITQ, DCCL, NLB and the pruning method. 
Particularly, one sees that on the Men data set, IIQ even outperforms the baseline embedding Glove. 
Another observation is that on TR9856, a higher compression ratio surprisingly yields better results for IIQ. 
We speculate that the cause is the multi-word term relations unique to TR9856.
Interestingly, the pruning method results in negative correlation in SimLex999 for the GloVe embedding. 
This means that pruning too many small values inside word embedding can drastically destroy the embedding quality.

\subsection{Categorization}

The task is to cluster words into different categories. 
The performance is measured by purity, which is defined as the fraction of correctly classified words.
We run the experiment using agglomerative clustering and k-means clustering, and select the highest purity as the final result for each embedding. 
This experiment is conducted on the platform \cite{jastrzebski2017evaluate} where OOV words are replaced by the mean vector.

Four data sets are used in this experiment: Almuhareb-Poesio (AP)  
\cite{almuhareb2005concept} with 402 words in 21 categories;
BLESS \cite{baroni2011we} with 200 nouns (animate or inanimate) in 17 categories;
Battig \cite{battig1969category} with 5231 words in 56 taxonomic categories; and
ESSLI2008 Workshop \cite{baroni2008} with 45 verbs in 9 semantic categories.

Table \ref{tbl:cat} lists evaluation results for GloVe and HDC embeddings.
One sees that the proposed IIQ method works better than ITQ, DCCL, and the pruning method on all data sets.
But NLB sometimes achieves the best result for example on Battig. 
For ESSLI, IIQ even outperforms the original GloVe and HDC embedding.

\begin{table}[h]
\centering
\caption{Categorization Results.}
\label{tbl:cat}
\begin{tabular}{|l|l|c|c|c|c|}
\hline
& \bf Method &  \bf AP & \bf BLESS & \bf Battig & \bf ESSLI \\ \hline\hline
\multirow{7}{*}{\rotatebox[origin=c]{90}{GloVe}} 
&Baseline & 62.94 & 78.50 & 45.13 & 57.78 \\ \cdashline{2-6} 
&Prune & 38.56 & 46.00 & 23.42 & 42.22 \\\cdashline{2-6}
&DCCL & 52.24 & 75.00 & 36.09 & 48.89 \\\cdashline{2-6} 
&NLB& 59.45 & 78.50 & \bf 43.39 & \bf 66.67 \\ \cdashline{2-6} 
&ITQ & 58.71 & 76.50 & 40.76 & 48.89 \\\cdashline{2-6} 
&\it IIQ-32 & \bf 64.18 & \bf 80.00 & 41.98 & 60.00\\
&\it IIQ-64 & 56.22 & 76.50 & 37.49 & 51.11 \\
&\it IIQ-128 & 45.02 & 69.00 & 31.43 &  44.44\\
\hline\hline
\multirow{7}{*}{\rotatebox[origin=c]{90}{HDC}} 
&Baseline & 65.42 & 81.50 & 43.18 & 60.00 \\ \cdashline{2-6} 
&Prune &  34.33 & 48.00 & 23.28 & 51.11 
\\ \cdashline{2-6} 
&DCCL & 55.97 & 74.50 & 40.16 &  53.33\\ \cdashline{2-6}
&NLB & 59.20 & 75.50 & \bf 41.88 & 62.22 \\ \cdashline{2-6}
&ITQ & 57.21 & 77.50 & 41.04 & 55.56 \\ \cdashline{2-6} 
&\it IIQ-32 & \bf 61.69 & \bf 78.00 & 41.29  & \bf 62.22 \\ 
&\it IIQ-64 & 48.51 & 72.50 & 35.90 & 53.33 \\ 
&\it IIQ-128 & 43.03 & 57.50 & 28.50 & 62.22 \\
\hline
\end{tabular}
\end{table}

\subsection{Topic Classification}
In this experiment, we perform topic classification by using sentence embedding.
The embedding is computed as the average of the corresponding word vectors.
The average of binary embedding is fed to the classifier in single precision. 
Missing words are treated as zero and so are OOV words. 
In this task, we train a Multi-Layer Perceptron (MLP) as the classifier for each method. Due to the different size of embeddings, we train 10 epochs for all Glove embeddings and 4 epochs for all HDC embedding. Five-fold cross validation is used to report classification accuracy. 

Four data sets are selected from \cite{wang2012baselines}, including movie review (MR), customer review (CR), opinion-polarity (MPQA), and subjectivity (SUBJ). Similar performance is achieved by using the original embedding. 
The experiment is conducted on the platform of \cite{conneau2018senteval}. 

Table \ref{tbl:class} shows the results for each method. 
Similar to the previous tasks, the proposed IIQ method consistently performs better than ITQ, pruning, and DCCL. 
The only exception is that for MPQA and SUBJ, DCCL and NLB achieves the best result for the GloVe embedding respectively. 
As the compression ratio increases, IIQ encounters performance degrade.

\begin{table}[!h]
\centering
\caption{Topic Classification Results.}
\label{tbl:class}
\begin{tabular}{|l|l|c|c|c|c|}
\hline 
& \bf Method & \bf CR & \bf MPQA & \bf MR & \bf SUBJ \\ \hline\hline
\multirow{7}{*}{\rotatebox[origin=c]{90}{GloVe}} 
&Baseline & 78.78 & 87.16 & 76.42 & 91.29 \\\cdashline{2-6} 
&Prune & 73.48 & 81.93 & 71.97 & 87.19 \\\cdashline{2-6} 
&DCCL & 77.27 & \bf 85.6 & 74.74 & 89.56 \\\cdashline{2-6} 
&NLB & 75.36 & 85.77 & 73.01 & \bf 89.92 \\\cdashline{2-6} 
&ITQ & 71.79 & 84.11 & 73.18 & 89.55 \\\cdashline{2-6} 
&\it IIQ-32 & \bf 77.7 & 85.15 & \bf 74.96 & 89.87\\
&\it IIQ-64&75.07&83.02&73.17&88.14\\
&\it IIQ-128 & 72.56&80.55&69.93&84.29\\
\hline\hline 
\multirow{7}{*}{\rotatebox[origin=c]{90}{HDC}} 
&Baseline&	76.40&	86.61&	75.71&	90.86\\\cdashline{2-6} 
&Prune	&70.97	&78.84	&67.56	&83.58\\\cdashline{2-6} 
&DCCL&	74.68	&84.2	&73.32	&89.43\\\cdashline{2-6} 
&NLB	&70.89	&84.51	&73.18	&89.48\\\cdashline{2-6} 
&ITQ	&73.57	&84.44	&72.3	&89.46\\\cdashline{2-6} 
&\it IIQ-32&	\bf 76.32&	\bf 84.77&	\bf 73.51&	\bf 89.91\\
&\it IIQ-64&	72.18&	82.07&	70.32&	87.41\\
&\it IIQ-128&	70.83&	77.62&	67.89&	84.62\\
\hline
\end{tabular}
\end{table}

\begin{table}[h]
\begin{center}
\caption{Configurations for IMDB Classification.}
\label{tbl:confimdb}
\begin{tabular}{|l|c|c|c|c|}
\hline \bf Method &  \bf Dimension & \bf Comp. Ratio \\ \hline
Baseline & $50000\times 300$ & 	1\\\hline
Prune & $50000\times 300$ &		20\\\hline
DCCL & $M=32, K=32$ & 27 \\\hline
NLB & $50000\times 300$ & 32 \\\hline
ITQ & $50000\times 300$ & 	32\\\hline
\it IIQ-32&	$50000\times 300$ &		32\\\hline
\it IIQ-64&	$50000\times 150$ &		64\\\hline
\it IIQ-128 & $50000\times 75$ &	128\\\hline
\end{tabular}
\end{center}
\end{table}

\subsection{Sentiment Analysis}

In this experiment, we evaluate over the embedding input to a pre-trained Convolutional Neural Network (CNN) model on the IMDB data set \cite{maas-EtAl:2011:ACL-HLT2011}. 
The CNN model follows the Keras tutorial \cite{kerastutorial}. We train 50,000 embedding vectors in 300 dimensions.
The model is composed of an embedding layer, followed by a dropout layer with probability 0.2, a 1D convolution layer with 250 filters and kernel size 3, a 1D max pooling layer, a fully connected layer with hidden dimension 250, a dropout layer with probability 0.2, a ReLU activation layer, and a single output fully connected layer with sigmoid activation. 
Moreover, we use adam optimizer with learning rate 0.0001, sentence length 400, batch size 128, and train for 20 epochs. 
Input embedding fed into CNN is kept fixed (not trainable).

The data set contains 25,000 movie reviews for training and another 25,000 for testing. 
We randomly separate 5,000 reviews from the training set as validation data.
The model with the best performance on the validation set is kept as the final model for measuring test accuracy.
Moreover, all results are averaged from 10 runs for each embedding. 
The baseline model is the pre-trained CNN model with 87.89\% accuracy. 
Table \ref{tbl:confimdb} summarizes the configurations for this experiment. 
All configurations are similar to the previous experiments. 
The DCCL method is now configured with $M=32$ and $K=32$ to achieve a similar compression ratio.

\begin{figure}[!h]
    \begin{center}
    \includegraphics[width =\columnwidth]{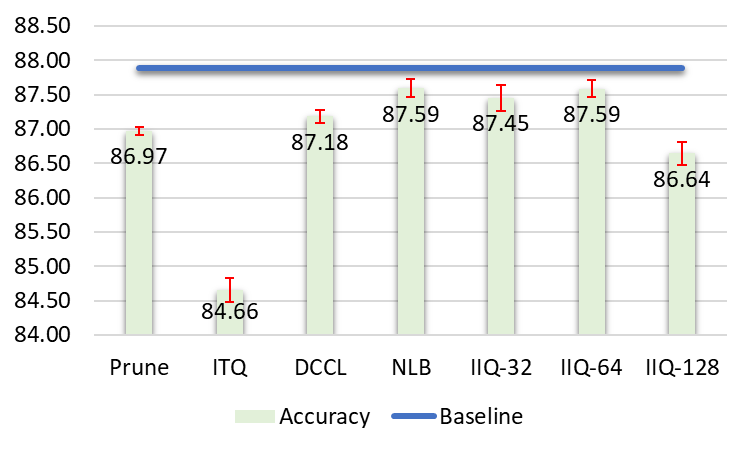}
    \end{center}
    \caption{IMDB CNN Test Accuracy Results.}
    \label{fig:imdb}
\end{figure}

\begin{figure*}[h]%
    \centering
    \subfloat[Nearest and furthest 100 words of ``cook'' in IIQ-GloVe. ]{{\includegraphics[width=\columnwidth]{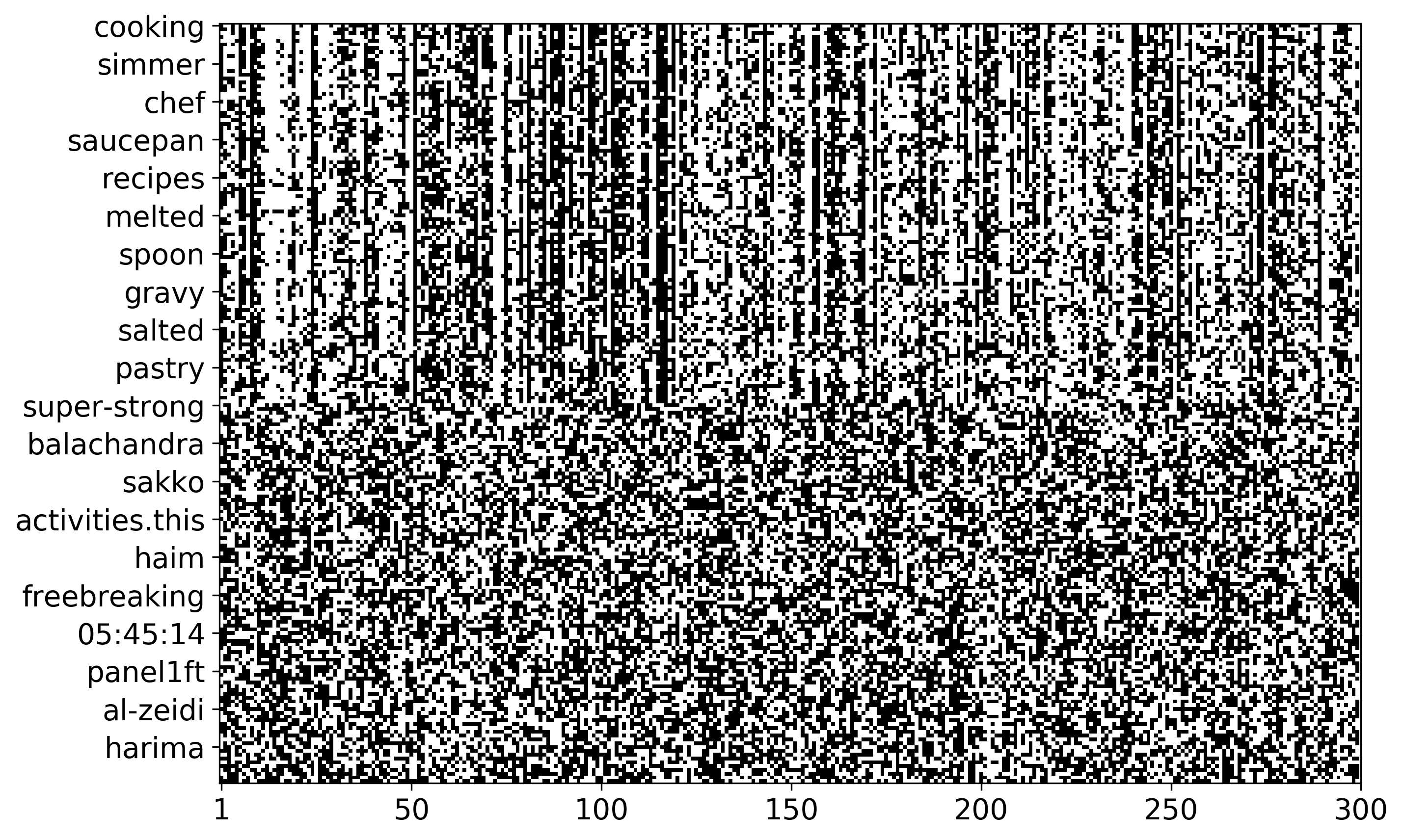} }}%
    \qquad
    \subfloat[Nearest and furthest 100 words of ``man'' in IIQ-HDC.]{{\includegraphics[width=.97\columnwidth]{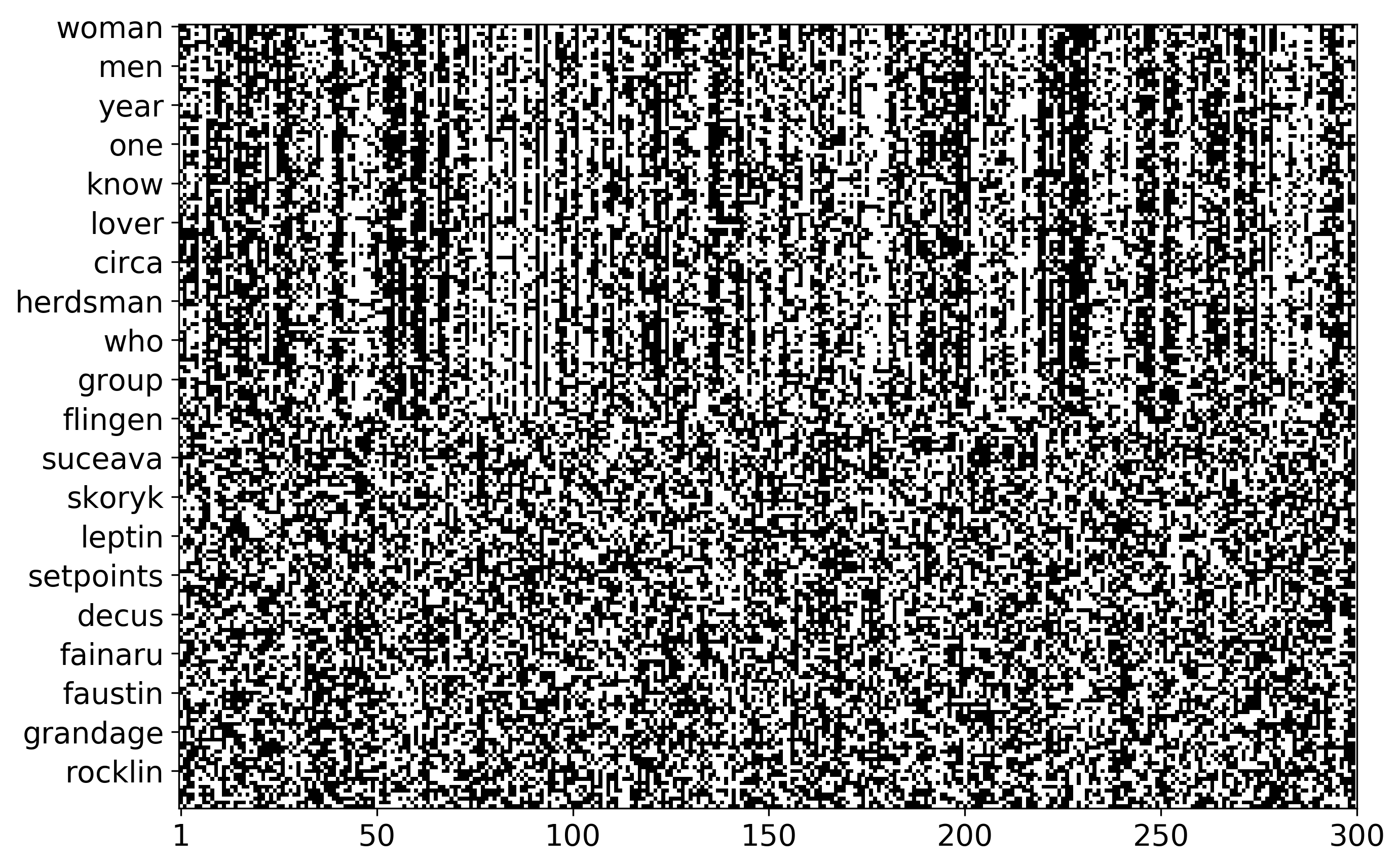} }}%
    \caption{Visualizing Binary IIQ Word Embedding.}
    \label{fig:vis}%
\end{figure*}

We present in Fig. \ref{fig:imdb} the result of each embedding. 
The histogram shows the average accuracy of 10 runs experiments for each method and the error bar shows the standard deviation. 
One sees that among all compression methods, IIQ  achieves the least performance degrade. 
IIQ with compression ratio 64 is the best.

\subsection{Visualization}
We visualize the binary IIQ embedding in Fig. \ref{fig:vis} 
The nearest and furthest 100 word vectors are shown. 
The distance is calculated by the dot product. 
Fig. \ref{fig:vis}(a) shows the IIQ-compressed GloVe embedding and Fig. \ref{fig:vis}(b) shows the IIQ-compressed HDC embedding. 
The y axis lists every 10 words and the x axis is the dimension of the embedding. 
One sees that similar word vectors have similar patterns in many dimensions.
A white column means that the dimension is zero for all words. A black column means one.
Moreover, there is obvious difference between nearest and furthest words.

\section{Conclusion}\label{sec:conclusion}
This paper presents an isotropic iterative quantization (IIQ) method for compressing word embeddings. 
While it is based on the ITQ method in image retrieval, it also maintains the embedding isotropy. 
We evaluate the proposed method on GloVe and HDC embeddings and show that it is effective for word similarity, categorization, and several other downstream tasks. For pre-trained embeddings that are less isotropic (e.g., GloVe), IIQ performs better than ITQ owing to the improvement on isotropy. These findings are based on a 32-fold (and higher) compression ratio. The results point to promising deployment of trained neural network models with word embeddings on resource constrained platforms in real life.

\bibliographystyle{named}
\bibliography{ref}

\end{document}